\documentclass[12pt]{article}
\usepackage{amsmath, amssymb, amsfonts}
\usepackage{amsthm}
\usepackage{geometry}
\usepackage{cite}
\usepackage[colorlinks=true,linkcolor=blue,citecolor=blue,urlcolor=blue]{hyperref}
\usepackage{tikz}
\usetikzlibrary{arrows.meta, positioning, calc}

\numberwithin{equation}{section}

\newtheorem{theorem}{Theorem}[section]

\newtheorem{corollary}{Corollary}[section]

\theoremstyle{definition}
\newtheorem{definition}{Definition}[section]
\newtheorem{remark}{Remark}[section]

\title{\textbf{Topological DeepONets and a generalization of the Chen--Chen operator approximation theorem}}

\author{\textsc{Vugar E. Ismailov}\thanks{The author can be contacted at
\texttt{vugaris@mail.ru} or \texttt{vugaris@gmail.com}.}}

\date{}

\begin{document}
\maketitle

\begin{abstract}
\noindent
Deep Operator Networks (DeepONets) provide a branch--trunk neural architecture
for approximating nonlinear operators acting between function spaces.
In the classical operator approximation framework, the input is a function
$u\in C(K_1)$ defined on a compact set $K_1$ (typically a compact subset of a
Banach space), and the operator maps $u$ to an output function
$G(u)\in C(K_2)$ defined on a compact Euclidean domain
$K_2\subset\mathbb{R}^d$.
In this paper, we develop a topological extension in which the operator input lies
in an arbitrary Hausdorff locally convex space $X$.
We construct topological feedforward neural networks on $X$ using continuous
linear functionals from the dual space $X^*$ and introduce
\emph{topological DeepONets} whose branch component acts on $X$ through such linear
measurements, while the trunk component acts on the Euclidean output domain.
Our main theorem shows that continuous operators
$G:V\to C(K;\mathbb{R}^m)$, where $V\subset X$ and $K\subset\mathbb{R}^d$ are compact,
can be uniformly approximated by such topological DeepONets.
This extends the classical Chen--Chen operator approximation theorem
from spaces of continuous functions to
locally convex spaces and yields a branch--trunk
approximation theorem beyond the Banach-space setting.
\end{abstract}

\medskip
\textbf{Keywords:}
DeepONet; topological neural network; locally convex space;
branch--trunk network; universal approximation theorem; operator approximation.

\medskip
\textbf{2020 MSC:} 68T07, 41A30, 41A65, 46A03, 47H99

\section{Introduction}

Deep neural networks are usually used to approximate nonlinear mappings
between finite-dimensional Euclidean spaces. However, in many scientific and engineering
applications, the object of interest is not a function but an
operator: a mapping that takes an input function and returns an
output function. In the operator-learning viewpoint, one aims to learn
a continuous nonlinear operator
\[
G:V\longrightarrow C(K;\mathbb{R}^m),
\]
from sampled input--output data, where $V$ is a compact set of admissible inputs
and $K$ is the output domain.

A prominent architecture for this purpose is the \emph{Deep Operator Network}
(DeepONet), proposed in \cite{Lu}. In its standard form, DeepONet
employs a \emph{branch network} (encoding the input function through sensor
measurements) and a \emph{trunk network} (processing the variable $y$),
and combines their outputs through a dot product. In this way,
DeepONet approximates $G(u)(y)$ in a separable form as a finite sum of
functions of $y$, with coefficients depending on the input $u$.

A central theoretical motivation for DeepONets comes from the universal
approximation theorem for operators established by Chen and Chen \cite{Chen2}.
This theorem shows that continuous operators between spaces of continuous
functions can be uniformly approximated on compact sets by expressions depending
on finitely many point evaluations of the input function.
DeepONets place such
approximants into a structured branch--trunk architecture and, in practice, make
both sides deep.

Since its introduction, DeepONet has been successfully applied to a wide range
of engineering and scientific problems. Early studies demonstrated its ability to learn operators associated with
dynamical systems and diffusion–reaction equations \cite{Chen2,Lu}.
Subsequent works employed DeepONets for learning nonlinear operators arising
in multiphysics and multiscale models, including electro-convection phenomena
\cite{Cai2021}, hypersonic flows governed by the Navier–Stokes equations with
finite-rate chemistry \cite{Mao2021}, and multiscale bubble-growth dynamics
\cite{Lin2021}. Related developments include physics-informed DeepONets for
learning solution operators of parametric PDEs \cite{Wang2021}.

More recent developments extend DeepONet-based operator learning to a variety of
challenging application domains, including complex-valued formulations for
three-dimensional Maxwell equations \cite{Jiang2024}, surrogate modeling for
shape optimization \cite{Shukla2024shape}, learning two-phase microstructure
evolution using neural-operator and autoencoder architectures \cite{Oommen2022},
and aerothermodynamic analysis of hypersonic configurations \cite{Shukla2024aero}.
In addition, DeepONet architectures have been explored in control-oriented
settings, including neural-operator approximation of backstepping controller and observer
gain functions for reaction–diffusion PDEs \cite{Krstic2024}, predictor-based
stabilization of nonlinear systems with input delay \cite{Bhan2025}, and predictive control \cite{Jong2025}.

The aim of the present paper is to develop a framework in which the \emph{input}
of the operator need not lie in a Euclidean space or, more generally, in a normed
linear space (such as a Hilbert or Banach space). Instead, we treat the case where
the input belongs to a locally convex topological vector space $X$.
In this viewpoint, the network receives admissible measurements of the input element
in the form of continuous linear functionals. That is, each hidden neuron evaluates
a continuous linear functional on the input element and then applies the activation
function. This setting is natural when the input is an element of an abstract
function space endowed with a locally convex topology and the architecture is allowed
to use linear measurements compatible with that topology. Such situations arise
frequently in analysis and applications.

For instance, spaces of differentiable functions, which arise naturally in the
theory of partial differential equations, provide fundamental examples of
non-normable topological spaces. The Schwartz space $\mathcal{S}(\mathbb{R}^n)$ of rapidly decreasing smooth
functions is equipped with the countable family of seminorms
\[
p_{a,b}(f)=\sup_{x\in\mathbb{R}^n}\bigl|x^a D^b f(x)\bigr|,
\qquad a,b\in\mathbb{N}^n,
\]
where $x^a=x_1^{a_1}\cdots x_n^{a_n}$ and $D^b=\partial^{|b|}/\partial x_1^{b_1}\cdots\partial x_n^{b_n}$.
This space is complete and metrizable, hence a Fr\'echet space, but it is not normable.

Another important example is the space $\mathcal{D}(U)$ of smooth functions
with compact support in an open set $U\subset\mathbb{R}^n$. For each compact
set $K\subset U$, the subspace
\[
C_0^\infty(K)=\{f\in C^\infty(U):\operatorname{supp}(f)\subset K\}
\]
is a Fréchet space with seminorms
\[
p_{K,m}(f)=\max_{|\alpha|\le m}\sup_{x\in K}|D^\alpha f(x)|,
\qquad m\in\mathbb{N}.
\]
The space $\mathcal{D}(U)$ is obtained as the inductive limit of the spaces
$C_0^\infty(K)$, where $(K_j)$ is a directed family of compact subsets of $U$
whose union equals $U$, that is,
\[
\mathcal{D}(U)=\varinjlim C_0^\infty(K_j).
\]
In this topology, $\mathcal{D}(U)$ is locally convex and complete, but not metrizable and therefore not normable. The space
$\mathcal{D}(U)$, known as the space of test functions, plays a fundamental
role in the theory of distributions.

More generally, for a topological space $X$, the space $C(X)$ of continuous
functions endowed with the topology of uniform convergence on compact sets,
defined by the seminorms
\[
\varphi_K(f)=\max_{x\in K}|f(x)|,
\qquad K\subset X\ \text{compact},
\]
is a locally convex space that is not normable unless $X$ is
compact.

In our recent work \cite{Ism2026a} we proved a universal approximation theorem for
feedforward neural networks on topological vector spaces with the
Hahn--Banach extension property, in particular on locally convex spaces.
These networks are constructed using continuous linear functionals from the
dual space $X^*$ together with a fixed scalar activation function.
The present paper uses this density mechanism on compact sets to build
a DeepONet-type approximation framework and to prove an operator approximation
theorem of branch--trunk type.

Specifically, we introduce a topological DeepONet architecture in which the
branch component acts on a locally convex input space $X$ through continuous linear measurements
from $X^*$, while the trunk component acts on the output domain
$K\subset\mathbb{R}^d$. Within this setting we prove a universal approximation
theorem for continuous operators $G:V\to C(K;\mathbb{R}^m)$ on compact sets
$V\subset X$ by finite separable expansions whose coefficient maps are realized by topological neural networks on $X$.

As a consequence, we obtain approximation theorems related to DeepONets, showing
that continuous nonlinear operators admit approximations of branch--trunk type extending the
classical dot-product formulation beyond the Banach-space setting. The results
place the Chen--Chen operator approximation principle \cite{Chen2} and the DeepONet
architecture \cite{Lu} into a unified locally convex framework.

The remainder of the paper is organized as follows.
Section~2 introduces topological neural networks and recalls the
universality theorem on locally convex spaces.
Section~3 develops the topological DeepONet architecture and proves
the main operator approximation theorems and their corollaries.
Section~4 presents several examples.

\section{Topological networks on locally convex spaces}

This section provides definitions and results that will be used later.

Throughout the paper, $X$ denotes a locally convex topological vector space
and $X^*$ its continuous dual. All locally convex spaces are assumed to be
Hausdorff.

\begin{definition}[Topological neural network on a locally convex space]\label{def:1layer}
Fix an activation function $\sigma:\mathbb{R}\to\mathbb{R}$ and let $m\ge 1$.
A (vector-valued) \emph{topological feedforward neural network} on $X$
with one hidden layer is any mapping $H:X\to\mathbb{R}^m$ of the form
\[
H(x)=A\,\sigma\!\bigl(T(x)\bigr),
\]
where
\[
T(x)=(f_1(x)-\theta_1,\dots,f_r(x)-\theta_r),
\]
with $f_i\in X^*$, $\theta_i\in\mathbb{R}$, and $A\in\mathbb{R}^{m\times r}$.
Here $\sigma$ acts componentwise on $\mathbb{R}^r$. Each hidden neuron evaluates
a continuous linear functional on the input element and then applies the
activation function.
\end{definition}

Deep networks are obtained by alternating affine maps and componentwise activation functions, 
with the output layer given by a linear map. For completeness, we recall one convenient formulation.

\begin{definition}[Deep topological neural network on $X$]\label{def:Llayer}
Fix integers $L\ge 2$ and $m\ge 1$, and let $n_1,\dots,n_{L-1}\ge 1$.
A (vector-valued) \emph{deep topological feedforward neural network} on $X$ of depth $L$
is a mapping $H:X\to\mathbb{R}^m$ of the form
\[
H(x)=A_L\,z_{L-1}(x),
\]
where the hidden-layer outputs $z_\ell(x)\in\mathbb{R}^{n_\ell}$ are defined by
\[
z_1(x)=\sigma\!\bigl(T_1(x)\bigr),\qquad
z_{\ell}(x)=\sigma\!\bigl(A_\ell z_{\ell-1}(x)-b_\ell\bigr),\ \ \ell=2,\dots,L-1.
\]
Here $\sigma$ acts componentwise, each $A_\ell$ is a real matrix of appropriate size,
and each $b_\ell$ is a real bias vector. The first affine map $T_1:X\to\mathbb{R}^{n_1}$
is defined by
\[
T_1(x)=\bigl(f_1(x)-\theta_1,\dots,f_{n_1}(x)-\theta_{n_1}\bigr),
\]
with $f_j\in X^*$ and $\theta_j\in\mathbb{R}$.
\end{definition}

\begin{remark}
When $X=\mathbb{R}^d$ endowed with its usual topology, the above classes of
networks reduce to the standard feedforward neural networks used in the
classical theory.
Indeed, in this case every continuous linear functional $f\in X^*$ has the form
\[
f(x)=w\cdot x, \qquad x\in\mathbb{R}^d,
\]
for some vector $w\in\mathbb{R}^d$, where $w\cdot x$ denotes the Euclidean
inner product. This follows from the well known representation theorem for
continuous linear functionals on Hilbert spaces. Consequently, the expressions
$f_j(x)-\theta_j$ appearing in the first affine map become
\[
w_j\cdot x-\theta_j,
\]
which are exactly the affine forms used in classical neural networks on
$\mathbb{R}^d$. Therefore, the topological neural networks introduced above extend the
traditional Euclidean neural networks to inputs belonging to general
locally convex spaces.
\end{remark}

We work with the space $C(X;\mathbb{R}^m)$ of continuous functions from
$X$ into $\mathbb{R}^m$ equipped with
the topology of uniform convergence on compact sets. This topology is generated
by the family of seminorms
\[
\|g\|_{K}
=
\sup_{x\in K}\|g(x)\|_{\mathbb{R}^m},
\]
where $K$ ranges over all compact subsets of $X$ and
$\|\cdot\|_{\mathbb{R}^m}$ denotes any fixed norm on $\mathbb{R}^m$.
Since all norms on a finite-dimensional space are equivalent, the
resulting topology does not depend on the particular choice of the norm.

A subbasis at the origin for this topology is given by the sets
\[
U(K,r)
=
\left\{
g\in C(X;\mathbb{R}^m): \|g\|_{K}<r
\right\},
\]
where $K\subset X$ is compact and $r>0$.

Thus, when we say that a family of functions $\mathcal{F}$ acting from $X$
into $\mathbb{R}^m$ is dense in $C(X;\mathbb{R}^m)$, we mean density with
respect to the topology of uniform convergence on compact sets. That is,
for every compact $K\subset X$, every $g\in C(K;\mathbb{R}^m)$, and every
$\varepsilon>0$, there exists $f\in\mathcal{F}$ such that
\[
\|g-f\|_{K}<\varepsilon.
\]
Equivalently, for each compact $K\subset X$, the restrictions
\[
\{\,f|_{K} : f\in\mathcal{F}\,\}
\]
are dense in $C(K;\mathbb{R}^m)$ with respect to $\|\cdot\|_{K}$.

The space $C(X;\mathbb{R})$ will be denoted by $C(X)$.

\begin{definition}[Tauber--Wiener function]
A function $\sigma:\mathbb{R}\to\mathbb{R}$ is called a
\emph{Tauber--Wiener function} if the linear span of the set
\[
\{\sigma(wt-\theta): w,\theta\in\mathbb{R}\}
\]
is dense in $C([a,b])$ for every closed interval $[a,b]\subset\mathbb{R}$.
\end{definition}

Functions with this property generate dense families of translations and
dilations of $\sigma$ on every compact subset of the real line and play a
central role in neural network approximation theory. The terminology originates
from the work of Chen and Chen \cite{Chen2}, where this condition is used in
the study of operator approximation by neural networks. Tauber--Wiener
functions have also been exploited in several subsequent works
(see, e.g., \cite{Valle,Ism2026b}).

We denote by $\mathcal{S}_\sigma(X;\mathbb{R}^m)$ the class of all
single-hidden-layer topological feedforward neural networks $H:X\to\mathbb{R}^m$
constructed from continuous linear functionals in $X^*$ and the
activation function $\sigma$ (see Definition~\ref{def:1layer}).

A scalar-valued version of the following theorem was proved in
\cite{Ism2026a} for neural networks on topological vector spaces possessing
the Hahn--Banach extension property. Since every locally convex space has
this property, the scalar case follows from that result. In the present
paper we restrict attention to locally convex spaces, which are widely used
in analysis, although the results remain valid for topological vector spaces
with the Hahn--Banach extension property. For completeness, we include below
a direct proof for the vector-valued case.

\begin{theorem}\label{thm:univX}
Let $X$ be a locally convex topological vector space and assume that
the activation function $\sigma$ is a Tauber--Wiener function. Then for every
compact set $K\subset X$, every function $g\in C(K;\mathbb{R}^m)$, and every
$\varepsilon>0$, there exists a topological neural network
$H\in\mathcal{S}_\sigma(X;\mathbb{R}^m)$ such that
\[
\|g-H\|_{K}<\varepsilon.
\]
In other words, the class $\mathcal{S}_\sigma(X;\mathbb{R}^m)$ is dense in
$C(X;\mathbb{R}^m)$ with respect to the topology of uniform convergence on
compact subsets of $X$.
\end{theorem}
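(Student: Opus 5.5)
First, I would reduce to the scalar case. Write $g=(g_1,\dots,g_m)$. It suffices to approximate each $g_k\in C(K)$ within $\varepsilon/m$ in sup norm, say with respect to the $\ell^1$ norm on $\mathbb{R}^m$, by a scalar network
\[
h_k(x)=\sum_{i}c_{k,i}\,\sigma\bigl(f_{k,i}(x)-\theta_{k,i}\bigr).
\]
Stacking all the hidden neurons of $h_1,\dots,h_m$ into one hidden layer $T$ and taking $A$ block-structured produces a single $H\in\mathcal{S}_\sigma(X;\mathbb{R}^m)$ with $\|g-H\|_K<\varepsilon$. Equivalence of norms on $\mathbb{R}^m$ then removes the dependence on the chosen norm.

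\textbf{Step 1: ridge functions are dense.} Consider the algebra
\[
\mathcal{A}=\operatorname{span}\{\varphi\circ f|_K:\ f\in X^*,\ \varphi\in C(\mathbb{R})\}.
\]
\begin{itemize}
\item Products reduce to single ridges: since $f_1,f_2\in X^*$, the map $x\mapsto \varphi_1(f_1(x))\varphi_2(f_2(x))$ is a continuous function of $(f_1,f_2)(x)\in\mathbb{R}^2$.
\item A cleaner route is to first take $\mathcal{B}=\{\psi(f_1(x),\dots,f_n(x)):\ n\in\mathbb{N},\ f_j\in X^*,\ \psi\in C(\mathbb{R}^n)\}$. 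This is clearly a subalgebra of $C(K)$ containing the constants.
\item It separates points of $K$ by Hahn--Banach: for $x\ne y$ in the Hausdorff locally convex space $X$ there is $f\in X^*$ with $f(x)\ne f(y)$.
\end{itemize}
Stone--Weierstrass then gives that $\mathcal{B}$ is dense in $C(K)$. Next, for fixed $f_1,\dots,f_n$, the image $P(K)$ of $P=(f_1,\dots,f_n):X\to\mathbb{R}^n$ is compact. By the classical finite-dimensional ridge-function density (e.g.\ Vostrecov--Kreines, or approximating $\psi$ by polynomials and writing each monomial as a combination of powers $(w\cdot t)^k$), $\psi$ is uniformly approximated on $P(K)$ by sums $\sum_j\varphi_j(w_j\cdot t)$. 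Composing with $P$, and noting that $w_j\cdot P(x)=\sum_i w_{j,i}f_i(x)$ is again an element of $X^*$, shows that finite sums $\sum_j\varphi_j(\tilde f_j(x))$ with $\tilde f_j\in X^*$ are dense in $C(K)$.

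\textbf{Step 2: replace $\varphi_j$ by $\sigma$.} Each $\tilde f_j(K)$ is a compact subset of $\mathbb{R}$, so it lies in some $[a,b]$. Because $\sigma$ is Tauber--Wiener, each $\varphi_j$ is approximated uniformly on $[a,b]$ by $\sum_i c_i\sigma(w_it-\theta_i)$. Substituting $t=\tilde f_j(x)$ gives $\sigma\bigl((w_i\tilde f_j)(x)-\theta_i\bigr)$ with $w_i\tilde f_j\in X^*$, which is exactly a neuron of Definition~\ref{def:1layer}. Tracking the errors by the triangle inequality over the finitely many $j$ finishes the scalar case.

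\textbf{Main obstacle.} I expect the main obstacle to be Step 1, and specifically the passage from multivariate functions of finitely many functionals to sums of single ridges. The point is that we must not use $\dim X<\infty$ anywhere, and we rely only on the finite-dimensional density result on the compact image $P(K)$. My first attempt would be the polynomial-plus-monomial-identity argument, since it is self-contained.
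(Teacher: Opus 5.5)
Your proof is correct, but it takes a different route from the paper's. The paper avoids cylinder functions entirely by using the exponential trick. The span $\operatorname{span}\{e^{\ell(x)}:\ell\in X^*\}$ is already closed under multiplication, because $e^{\ell_1}e^{\ell_2}=e^{\ell_1+\ell_2}$ with $\ell_1+\ell_2\in X^*$. So Stone--Weierstrass, with Hahn--Banach supplying point separation, directly gives density of finite sums of single ridges $\alpha_i e^{\ell_i(x)}$. The Tauber--Wiener property then has to be applied only to the one profile $t\mapsto e^t$ on intervals $[a_i,b_i]\supset\ell_i(K)$.

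This is exactly what your first bullet was reaching for. As stated, that bullet is wrong: for general continuous $\varphi$, the span $\mathcal{A}$ is not an algebra. A product $\varphi_1(f_1)\varphi_2(f_2)$ is a function of two functionals and in general not a finite sum of ridges; for instance, $e^{x_1}\sin x_2$ on $\mathbb{R}^2$ is not. For $\varphi=\exp$, however, products of ridges are ridges, and that is the paper's trick. You correctly abandoned $\mathcal{A}$ for $\mathcal{B}$, so this does not affect your argument, but you should drop the word ``algebra'' and the claim that products reduce to single ridges.

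Your actual route uses Stone--Weierstrass on the cylinder algebra $\mathcal{B}$ to reduce to the compact image $P(K)\subset\mathbb{R}^n$. It then invokes finite-dimensional ridge density: Weierstrass, plus the fact that homogeneous polynomials of degree $k$ are spanned by the powers $(w\cdot t)^k$. This costs an extra lemma that the paper does not need, and the paper's argument is shorter and fully self-contained.

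Your argument is more modular, though. It shows that \emph{any} density result on compact subsets of $\mathbb{R}^n$ transfers to $X$ by precomposing with $P=(f_1,\dots,f_n)$, using only that $w\cdot P\in X^*$. In particular, you could skip Step~2 by applying the classical $\mathbb{R}^n$ theorem for Tauber--Wiener $\sigma$ directly to $\psi$ on $P(K)$. The same mechanism also yields density of the deep topological networks of Definition~\ref{def:Llayer} from the corresponding Euclidean results. The paper only asserts that in a later remark and does not prove it.
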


\begin{proof}
Fix a compact set $K\subset X$, a function
\[
g=(g_1,\dots,g_m)\in C(K;\mathbb{R}^m),
\]
and $\varepsilon>0$. We construct a topological neural network
$H\in\mathcal{S}_\sigma(X;\mathbb{R}^m)$ such that $\|g-H\|_K<\varepsilon$.

We equip $\mathbb{R}^m$ with the sup norm
\[
\|x\|_{\mathbb{R}^m}=\max_{1\le r\le m}|x_r|.
\]
Since all norms on the finite-dimensional space $\mathbb{R}^m$ are equivalent,
this choice does not change the topology of uniform convergence on compact sets.

Then
\[
\|g-H\|_{K}
=
\sup_{x\in K}\|g(x)-H(x)\|_{\mathbb{R}^m}
=
\sup_{x\in K}\max_{1\le r\le m}|g_r(x)-H_r(x)|.
\]
Hence
\[
\|g-H\|_{K}
=
\max_{1\le r\le m}\|g_r-H_r\|_{K},
\]
where for scalar-valued functions on $K$ we write
$\|h\|_K=\sup_{x\in K}|h(x)|$.

Thus it suffices to construct, for each $r=1,\dots,m$, a scalar topological neural network $H_r$
such that
\[
\|g_r-H_r\|_K<\varepsilon.
\]

Fix $r\in\{1,\dots,m\}$. Consider the set
\[
\mathcal{E}
=
\operatorname{span}\{\,e^{\ell(x)}:\ \ell\in X^*\,\}
\subset C(K).
\]
This set is an algebra, since for $\ell_1,\ell_2\in X^*$ we have
\[
e^{\ell_1(x)}e^{\ell_2(x)} = e^{(\ell_1+\ell_2)(x)},
\qquad \ell_1+\ell_2\in X^*,
\]
and it contains the constant functions because $e^{0}=1$.

Moreover, $\mathcal{E}$ separates points of $K$. Indeed, if $x,y\in K$
with $x\neq y$, then by the Hahn--Banach continuous extension theorem there exists
$\ell\in X^*$ such that $\ell(x)\neq \ell(y)$ (see, e.g.,
\cite[Theorem~3.6]{Rudin}). Consequently,
\[
e^{\ell(x)}\neq e^{\ell(y)}.
\]
Therefore, by the Stone--Weierstrass theorem, $\mathcal{E}$ is dense in
$C(K)$ with respect to the uniform norm.

Hence there exist $\ell_1,\dots,\ell_M\in X^*$ and real coefficients
$\alpha_1,\dots,\alpha_M$ such that
\begin{equation}\label{eq:expapprox}
\left\|
g_r-\sum_{i=1}^M \alpha_i e^{\ell_i(\cdot)}
\right\|_{K}
<
\frac{\varepsilon}{2}.
\end{equation}

Since each $\ell_i$ is continuous and $K$ is compact, the image
$\ell_i(K)$ is a compact subset of $\mathbb{R}$. Hence there exists a
compact interval $[a_i,b_i]$ such that $\ell_i(K)\subset[a_i,b_i]$.

Because $\sigma$ is a Tauber--Wiener function, the linear span of the
family $\{\sigma(wt-\theta):w,\theta\in\mathbb{R}\}$ is dense in
$C([a_i,b_i])$. Applying this property to the function $t\mapsto e^t$,
we obtain, for each $i$, an integer $N_i$ and real parameters
$c_{i,j},w_{i,j},\theta_{i,j}$ such that
\[
\sup_{t\in[a_i,b_i]}
\left|
e^t-\sum_{j=1}^{N_i} c_{i,j}\sigma(w_{i,j}t-\theta_{i,j})
\right|
<
\frac{\varepsilon}{2(1+\sum_{i=1}^M|\alpha_i|)}.
\]

Since $\ell_i(K)\subset[a_i,b_i]$, substituting $t=\ell_i(x)$ gives
\begin{equation}\label{eq:twapprox}
\sup_{x\in K}
\left|
e^{\ell_i(x)}
-
\sum_{j=1}^{N_i} c_{i,j}\sigma(w_{i,j}\ell_i(x)-\theta_{i,j})
\right|
<
\frac{\varepsilon}{2(1+\sum_{i=1}^M|\alpha_i|)}.
\end{equation}

Note that for each $i,j$, the map $x\mapsto w_{i,j}\ell_i(x)$ is again a
continuous linear functional on $X$, that is,
$w_{i,j}\ell_i\in X^*$.
Thus each term $\sigma(w_{i,j}\ell_i(x)-\theta_{i,j})$ has the form
$\sigma(f(x)-\theta)$ with $f\in X^*$.

Define
\[
H_r(x)
=
\sum_{i=1}^M \alpha_i
\sum_{j=1}^{N_i}
c_{i,j}\sigma\bigl(w_{i,j}\ell_i(x)-\theta_{i,j}\bigr),
\qquad x\in X.
\]
Then $H_r\in\mathcal{S}_\sigma(X;\mathbb{R})$.

Using \eqref{eq:twapprox}, multiplying the corresponding estimates by
$|\alpha_i|$ and summing over $i=1,\dots,M$, we obtain
\[
\left\|
\sum_{i=1}^M \alpha_i e^{\ell_i(\cdot)} - H_r
\right\|_{K}
<
\frac{\varepsilon}{2}.
\]
Combining this with \eqref{eq:expapprox} yields
\[
\|g_r-H_r\|_K
<
\varepsilon.
\]

Repeating the construction for each component $r=1,\dots,m$, we obtain
scalar topological neural networks $H_1,\dots,H_m$. Define
\[
H(x)=(H_1(x),\dots,H_m(x)),\qquad x\in X.
\]
Then $H\in\mathcal{S}_\sigma(X;\mathbb{R}^m)$ and
\[
\|g-H\|_K
=
\sup_{x\in K}\max_{1\le r\le m}|g_r(x)-H_r(x)|
<
\varepsilon.
\]
This completes the proof.
\end{proof}

In the next section, topological neural networks on $X$ play the role of the branch component in a
DeepONet. Theorem~\ref{thm:univX} provides the tool for approximating continuous
coefficient maps in the locally convex setting, where the
available measurements are given by continuous linear functionals.

The preceding result provides a universality principle for neural networks on locally
convex input spaces. An important example arises when the input space is the Banach
space $C(K)$ endowed with the uniform norm. In this case, Theorem~\ref{thm:univX} 
yields the classical operator approximation
theorem of Chen and Chen \cite{Chen2}.

\begin{theorem}[Chen and Chen \cite{Chen2}]
\label{thm:chen-chen}
Let $K$ be a compact metric space and let $V\subset C(K)$ be compact.
Assume that the activation $\sigma \in C(\mathbb{R})$ is a
Tauber--Wiener function. Then for every continuous functional $f\in C(V)$
and every $\varepsilon>0$ there exist integers $N,k\ge1$, points
$x_1,\dots,x_k\in K$, and real parameters $c_i,\theta_i,\xi_{ij}$
($i=1,\dots,N$, $j=1,\dots,k$) such that
\[
\left|
f(u)-\sum_{i=1}^{N}c_i\,
\sigma\!\left(\sum_{j=1}^{k}\xi_{ij}\,u(x_j)-\theta_i\right)
\right|
<\varepsilon
\]
holds for all $u\in V$.
\end{theorem}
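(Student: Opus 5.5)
Apply Theorem~\ref{thm:univX} with $X=C(K)$, which is a Banach space and hence locally convex. Take the compact set $V\subset C(K)$, the continuous functional $f\in C(V)$ and $m=1$. The theorem gives a network $H(u)=\sum_{i=1}^N c_i\,\sigma(\ell_i(u)-\theta_i)$ with $\ell_i\in C(K)^*$ and $\sup_{u\in V}|f(u)-H(u)|<\varepsilon/2$. The remaining task is to replace each functional $\ell_i$, which by the Riesz representation theorem is integration against a signed Radon measure on $K$, by a finite combination of point evaluations $u\mapsto\sum_j\xi_{ij}u(x_j)$. Since all $\ell_i$ must use the same nodes $x_1,\dots,x_k$, I will simply take the union of the nodes used for the various $i$, setting coefficients to zero where a node is not needed.

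The key step, and the main obstacle, is approximating $\ell_i$ by point-evaluation combinations \emph{uniformly on $V$}. Pointwise approximation on each $u$ is not enough; we need a bound that holds for all $u\in V$ at once. I would get this from equicontinuity. By Arzel\`a--Ascoli, $V$ is bounded and uniformly equicontinuous. Given $\delta>0$, choose $\eta>0$ so that $d(x,y)<\eta$ implies $|u(x)-u(y)|<\delta$ for every $u\in V$. Cover the compact metric space $K$ by finitely many balls of radius $\eta$ and take a subordinate continuous partition of unity $\{\phi_j\}_{j=1}^k$ with $\operatorname{supp}\phi_j\subset B(x_j,\eta)$. Define $P u=\sum_j u(x_j)\phi_j$. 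Then $\|u-Pu\|_\infty\le\delta$ for all $u\in V$, because wherever $\phi_j(x)\neq 0$ we have $|u(x)-u(x_j)|<\delta$. Consequently $|\ell_i(u)-\sum_j \ell_i(\phi_j)u(x_j)|\le\|\ell_i\|\,\delta$ uniformly on $V$. So set $\xi_{ij}=\ell_i(\phi_j)$, using a common partition for all $i$. This construction automatically gives shared nodes and avoids measure theory entirely.

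To finish, I pass the perturbation through $\sigma$. The values $\ell_i(u)-\theta_i$ and their approximations lie in a fixed compact interval $[-R,R]$, since $V$ is bounded, $\|Pu\|_\infty\le\|u\|_\infty$, and $\|\ell_i\|$ is finite. The activation $\sigma$ is continuous, so it is uniformly continuous on $[-R,R]$. Choosing $\delta$ small enough then makes
\[
\Bigl|\sigma(\ell_i(u)-\theta_i)-\sigma\Bigl(\sum_j\xi_{ij}u(x_j)-\theta_i\Bigr)\Bigr|<\frac{\varepsilon}{2(1+\sum_i|c_i|)}
\]
for all $u\in V$ and all $i$. Summing these bounds and combining with the $\varepsilon/2$ estimate from the first step gives the claimed inequality. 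Continuity of $\sigma$ is exactly the hypothesis $\sigma\in C(\mathbb{R})$ in the statement, and it is used only here.
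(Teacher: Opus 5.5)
Your proof is correct and has the same three-step skeleton as the paper's. You apply Theorem~\ref{thm:univX} with $X=C(K)$, replace each $\ell_i$ uniformly on $V$ by a combination of point evaluations using equicontinuity, and absorb the perturbation through uniform continuity of $\sigma$ on a compact interval. The difference lies in the middle step.

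The paper writes $\ell_i(u)=\int_K u\,d\mu_i$ by the Riesz theorem and asserts that a Riemann-type sum approximates this uniformly on $V$. Filling this in requires a Borel partition $E_1,\dots,E_k$ of $K$ into sets of small diameter, together with the estimate $\bigl|\int_K u\,d\mu_i-\sum_j u(x_j)\mu_i(E_j)\bigr|\le\delta\,|\mu_i|(K)$. The measure is needed there because the indicators $\chi_{E_j}$ are not in $C(K)$, so $\ell_i$ must first be extended to bounded Borel functions.

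Your continuous partition of unity avoids this. Since $Pu=\sum_j u(x_j)\phi_j$ lies in $C(K)$, the choice $\xi_{ij}=\ell_i(\phi_j)$ and the bound $\|\ell_i\|\,\delta$ follow from linearity and boundedness alone. This buys you three things:
\begin{itemize}
\item no measure theory is needed;
\item the nodes are automatically common to all $i$, so your union-of-nodes remark is unnecessary;
\item the inequality $\|Pu\|_\infty\le\|u\|_\infty$ shows explicitly that $[-R,R]$ can be fixed before $\delta$ is chosen, a point the paper leaves implicit.
\end{itemize}
The two constructions are the same idea, since a Borel partition is just a discontinuous partition of unity, so neither is more general. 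Yours is the more self-contained and easily checked version.
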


\begin{proof}
We regard $C(K)$ as a Banach space and hence as a locally convex space.
Thus, Theorem~\ref{thm:univX} applies with $X=C(K)$, compact set $V\subset X$,
and $m=1$. Therefore, there exist $N\ge1$, continuous linear functionals
$\ell_1,\dots,\ell_N\in C(K)^*$, and real numbers $c_i,\theta_i$ such that
\begin{equation}
\sup_{u\in V}\left|
f(u)-\sum_{i=1}^{N} c_i\,\sigma(\ell_i(u)-\theta_i)
\right|
<\frac{\varepsilon}{2}.
\label{eq:sA}
\end{equation}

Each $\ell_i$ is a continuous linear functional on $C(K)$ and hence, by the
Riesz representation theorem, admits a representation
\[
\ell_i(u)=\int_K u\,d\mu_i
\]
for a finite signed Borel measure $\mu_i$ on $K$.

Since $V\subset C(K)$ is compact, it is equicontinuous by the
Arzel\`a--Ascoli theorem. Consequently, for every $\delta>0$, each functional $\ell_i$ can be
uniformly approximated on $V$ by a Riemann-type sum for the integral
$\int_K u\,d\mu_i$, i.e., by a finite linear combination of
point evaluations. That is, there exist points
$x_1,\dots,x_k\in K$ and coefficients $\xi_{ij}$ such that
\[
\sup_{u\in V}\left|
\ell_i(u)-\sum_{j=1}^{k}\xi_{ij}\,u(x_j)
\right|
<\delta,
\qquad i=1,\dots,N.
\]

Since $\sigma$ is uniformly continuous on compact intervals containing the
ranges of the arguments, we may choose $\delta$ sufficiently small so that
\begin{equation}
\sup_{u\in V}\left|
\sum_{i=1}^{N} c_i\,\sigma(\ell_i(u)-\theta_i)
-
\sum_{i=1}^{N} c_i\,\sigma\!\left(\sum_{j=1}^{k}\xi_{ij}u(x_j)-\theta_i\right)
\right|
<
\frac{\varepsilon}{2}.
\label{eq:sC}
\end{equation}

Combining \eqref{eq:sA} and \eqref{eq:sC} by the triangle inequality
gives the desired $\varepsilon$-approximation.
\end{proof}

\section{DeepONets on locally convex spaces and operator approximation}

Let $X$ be a locally convex topological vector space, $V\subset X$ a compact set, and
$K\subset\mathbb{R}^d$ a compact set. We consider continuous operators
\[
G:V\longrightarrow C(K;\mathbb{R}^m),
\]
where $C(K;\mathbb{R}^m)$ is equipped with the uniform norm $\|\cdot\|_K$.
In the DeepONet framework, one seeks to approximate the mapping
\[
(u,y)\longmapsto G(u)(y), \qquad (u,y)\in V\times K,
\]
by a structured network that separates the dependence on the input $u$ from
the dependence on the coordinate $y$.

\begin{definition}[Topological DeepONet]\label{def:TDeepONet}
Fix integers $p\ge 1$ and $m\ge 1$. A \emph{topological DeepONet} consists of

\begin{itemize}

\item[1)] a \emph{branch network}
\[
\mathcal{B}:X\to\mathbb{R}^{m\times p},
\]
which takes an input $u\in X$ and outputs a matrix
\[
\mathcal{B}(u)=[b_1(u)\ \cdots\ b_p(u)],
\]
where each column $b_k:X\to\mathbb{R}^m$ is a topological neural network on $X$
constructed using continuous linear functionals from the dual space $X^*$
and an activation function $\sigma$ (see Definition~\ref{def:1layer});

\item[2)] a \emph{trunk network}
\[
\mathcal{T}:\mathbb{R}^d\to\mathbb{R}^{p},
\]
which takes a point $y\in\mathbb{R}^d$ as input and outputs the vector
\[
\mathcal{T}(y)=(t_1(y),\dots,t_p(y))^T,
\]
represented by a Euclidean neural network (for example, a single-hidden-layer network),
whose components $t_k:\mathbb{R}^d\to\mathbb{R}$, $k=1,\dots,p$, denote
the $p$ outputs of the network.
\end{itemize}

The resulting DeepONet defines an operator
\[
\widehat G:X\to C(\mathbb{R}^d,\mathbb{R}^m)
\]
given by
\[
(\widehat G(u))(y)=\mathcal{B}(u)\,\mathcal{T}(y),
\qquad u\in X,\; y\in\mathbb{R}^d.
\]

Equivalently,
\begin{equation}\label{eq:separableSum}
\widehat G(u)(y)=\sum_{k=1}^{p} b_k(u)\,t_k(y).
\end{equation}
\end{definition}

The architecture of the topological DeepONet is illustrated in Figure~\ref{fig:topodeeponet}.

\begin{figure}[!htbp]
\centering
\begin{tikzpicture}[
node distance=1.9cm,
every node/.style={draw, rectangle, rounded corners, minimum width=3cm, minimum height=1cm, align=center},
arrow/.style={->, thick}
]

\node (u) {$u\in X$ \\ \small(locally convex space)};

\node (sensors) [below=of u]
{$f_1(u),\,f_2(u),\,\dots,\,f_r(u)$ \\ \small$f_j\in X^*$};

\node (branch) [below=of sensors]
{Branch network};

\node (b) [below=of branch]
{$[b_1(u),\,\dots,\,b_p(u)]$};

\node (y) [right=6cm of sensors] {$y\in\mathbb{R}^d$};

\node (trunk) [below=of y]
{Trunk network};

\node (t) [below=of trunk]
{$[t_1(y),\,\dots,\,t_p(y)]^T$};

\node (combine) [draw, circle, minimum size=1.3cm] at ($(b)!0.5!(t)$) {$\times$};

\node (out) [below=2cm of combine]
{$\widehat G(u)(y)=\displaystyle\sum_{k=1}^{p} b_k(u)t_k(y)$};

\draw[arrow] (u) -- (sensors);
\draw[arrow] (sensors) -- (branch);
\draw[arrow] (branch) -- (b);

\draw[arrow] (y) -- (trunk);
\draw[arrow] (trunk) -- (t);

\draw[arrow] (b) -- (combine);
\draw[arrow] (t) -- (combine);

\draw[arrow] (combine) -- (out);

\end{tikzpicture}
\caption{Topological DeepONet architecture. The branch network encodes the input 
element $u\in X$, where $X$ is a locally convex space, through finitely many linear 
measurements $f_1(u),\dots,f_r(u)$ with $f_j\in X^*$. The trunk network takes $y\in\mathbb{R}^d$ 
as input and yields $[t_1(y),\dots,t_p(y)]^T\in\mathbb{R}^p$. The outputs of the branch and 
trunk networks are then multiplied to produce the final output. If $X$ is the space of 
continuous functions and the functionals $f_j$ are point evaluation functionals, 
$f_j(u)=u(x_j)$, then the classical DeepONet architecture is recovered.}
\label{fig:topodeeponet}
\end{figure}
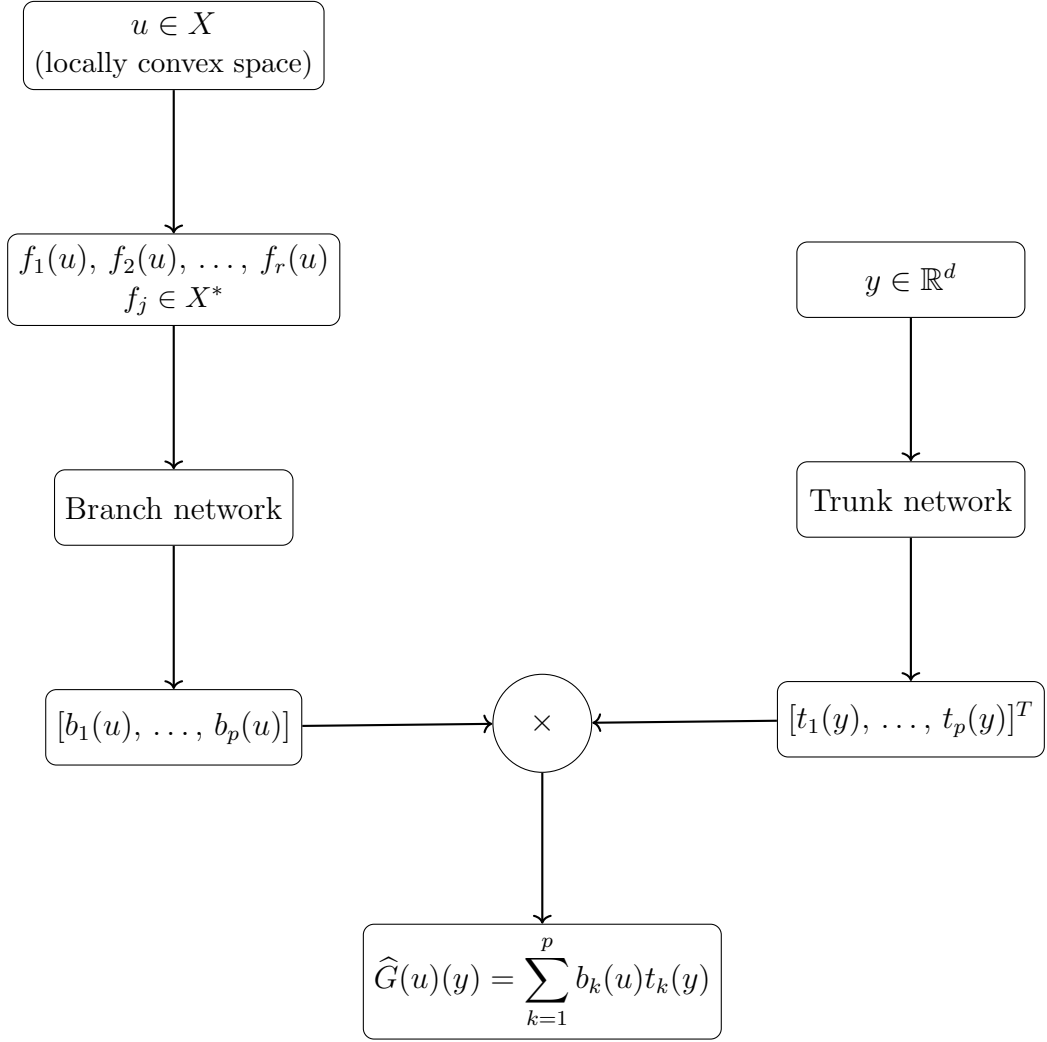

\begin{remark}
When $m=1$, \eqref{eq:separableSum} reduces to the classical dot-product form
\[
\widehat G(u)(y)=\langle b(u),t(y)\rangle,
\]
with $b(u)\in\mathbb{R}^p$ and $t(y)\in\mathbb{R}^p$. This is the standard
DeepONet architecture introduced in \cite{Lu}. The above definition
extends this construction to inputs from general locally convex spaces and to
vector-valued outputs, while preserving the explicit branch--trunk separation.
\end{remark}

For example, when $m=1$ and the trunk network is realized by a
single-hidden-layer neural network, the representation \eqref{eq:separableSum}
can be written explicitly in terms of activation functions. In particular,
if the branch network uses linear measurements $f_j(u)$, it takes the form
\[
\widehat G(u)(y)
=
\sum_{k=1}^{p}
\underbrace{\sum_{i=1}^{n}
c_{ki}\,
\sigma\!\left(\sum_{j=1}^{r}\xi_{kij}f_j(u)-\theta_{ki}\right)}_{\text{branch}}
\;
\underbrace{\sigma(\omega_k\cdot y+\zeta_k)}_{\text{trunk}}.
\]
This coincides with the displayed formula in \cite{Lu} when
$X$ is the space of continuous functions and the functionals $f_j(u)$ are taken as point evaluations
$f_j(u)=u(x_j)$.

Theorem~\ref{thm:univX} provides approximation capability on the branch network side.
On the trunk network side, we rely on the classical density of 
ridge networks (i.e., single-hidden-layer neural networks) on
compact subsets of $\mathbb{R}^d$. It is well known that if the activation
$\sigma$ is a Tauber--Wiener function, then finite linear combinations of ridge functions
\[
y \longmapsto \sigma(\omega\cdot y+\zeta),
\qquad y,\omega\in\mathbb{R}^d,\ \zeta\in\mathbb{R},
\]
are dense in $C(K)$ for every compact $K\subset\mathbb{R}^d$
(see, e.g., \cite{Pinkus,Leshno}). For background on ridge functions, see
\cite{Ism,Pin}.

We now prove a universal approximation theorem for continuous operators
$G:V\to C(K;\mathbb{R}^m)$ by finite separable expansions of the form
\eqref{eq:separableSum}, where the coefficient maps are realized by branch
topological neural networks on $X$.

\begin{theorem}
\label{thm:operator-separable}
Let $X$ be a locally convex topological vector space and let $V\subset X$ be compact.
Let $K\subset\mathbb{R}^d$ be compact and let
$G:V\to C(K;\mathbb{R}^m)$ be continuous.
Assume that the activation $\sigma\in C(\mathbb{R})$
is a Tauber--Wiener function.

Then for every $\varepsilon>0$ there exist an integer $N\ge1$, ridge functions
\[
\phi_k(y)=\sigma(\omega_k\cdot y+\zeta_k),
\qquad k=1,\dots,N,
\]
and topological neural networks
\[
a_k:X\to\mathbb{R}^m,
\qquad k=1,\dots,N,
\]
constructed according to Definition~\ref{def:1layer},
such that
\[
\sup_{u\in V}\ \sup_{y\in K}
\left\|
G(u)(y)-\sum_{k=1}^N a_k(u)\,\phi_k(y)
\right\|_{\mathbb{R}^m}
<\varepsilon.
\]
\end{theorem}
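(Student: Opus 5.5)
The plan is to first separate the variables $u$ and $y$ using compactness of the image $G(V)$, then approximate the resulting $y$-dependent factors by ridge functions, and finally approximate the $u$-dependent coefficients by Theorem~\ref{thm:univX}.

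\emph{Step 1: a finite separable expansion.}
Since $G$ is continuous and $V$ is compact, $G(V)$ is compact in $C(K;\mathbb{R}^m)$. Work componentwise with the sup norm on $\mathbb{R}^m$, so it suffices to handle each $G_r(u)\in C(K)$ for $r=1,\dots,m$. By the density of ridge networks in $C(K)$ quoted above (\cite{Pinkus,Leshno}), the linear span $\mathcal{R}$ of $\{\sigma(\omega\cdot y+\zeta)\}$ is dense in $C(K)$.

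Fix $\eta>0$ and pick an $\eta$-net $\{G(u^1),\dots,G(u^J)\}$ of $G(V)$. Take a continuous partition of unity $\{\psi_s\}_{s=1}^J$ on $V$ subordinate to the open cover
\[
\{u: \|G(u)-G(u^s)\|_K<\eta\}, \qquad s=1,\dots,J.
\]
Such a partition exists because $V$ is compact Hausdorff, hence normal. Then
\[
\Bigl\|G(u)-\sum_s \psi_s(u)G(u^s)\Bigr\|_K<\eta \qquad \text{for all } u\in V,
\]
because each $G(u^s)$ with $\psi_s(u)\neq0$ lies within $\eta$ of $G(u)$ and the $\psi_s$ sum to one.

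Next, approximate each component $G_r(u^s)$ within $\eta$ in $C(K)$ by ridge sums $\sum_k \beta_{r,s,k}\phi_k(y)$. All $r$ and $s$ can be taken to use a common finite list $\phi_1,\dots,\phi_N$ by concatenating the lists and padding with zero coefficients. Define
\[
c_k(u)=\Bigl(\sum_s \psi_s(u)\beta_{r,s,k}\Bigr)_{r=1}^m\in C(V;\mathbb{R}^m).
\]
Since $\sum_s\psi_s\equiv1$ and $\psi_s\ge0$, convexity gives
\[
\sup_{u\in V,\,y\in K}\Bigl\|G(u)(y)-\sum_k c_k(u)\phi_k(y)\Bigr\|<2\eta .
\]

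\emph{Step 2: approximate the coefficient maps.}
Let $M=1+\max_k\sup_{y\in K}|\phi_k(y)|$, which is finite since $\sigma$ is continuous and $K$ is compact. Apply Theorem~\ref{thm:univX} to each $c_k\in C(V;\mathbb{R}^m)$. This gives topological networks $a_k\in\mathcal{S}_\sigma(X;\mathbb{R}^m)$ with $\|c_k-a_k\|_V<\eta/(NM)$, so the extra error is at most $\eta$. Choosing $\eta=\varepsilon/3$ yields total error below $\varepsilon$.

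\emph{Main obstacle.}
The delicate point is Step 1: the coefficients must depend continuously on $u$, since Theorem~\ref{thm:univX} only applies to continuous maps, and a single finite trunk basis must serve uniformly for all $u\in V$. The partition-of-unity construction handles both requirements. The order of the choices also matters: $N$ and the $\phi_k$ must be fixed before $a_k$ is chosen, so that the tolerance $\eta/(NM)$ in Step 2 is well defined.
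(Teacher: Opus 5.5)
Your proof is correct and follows essentially the same route as the paper: use compactness of $G(V)$ to get a finite cover with a common ridge basis, build a continuous partition of unity to obtain continuous coefficient maps, and then apply Theorem~\ref{thm:univX} on $V$ with a tolerance fixed after $N$ and the $\phi_k$ are chosen. The only difference is cosmetic: you build the partition of unity directly on the compact Hausdorff set $V$ from the pulled-back cover $\{u:\|G(u)-G(u^s)\|_K<\eta\}$, whereas the paper builds it on the compact metric space $W=G(V)$ and then composes the coefficient maps $c_k$ with $G$.
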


\begin{proof}
Define
\[
W := G(V) \subset C(K;\mathbb{R}^m).
\]
Since $V$ is compact and $G$ is continuous, $W$ is compact in
$C(K;\mathbb{R}^m)$ equipped with the uniform norm
\[
\|h\|_{K}=\sup_{y\in K}\|h(y)\|_{\mathbb{R}^m}.
\]
We equip $\mathbb{R}^m$ with the sup-norm
\[
\|x\|_{\mathbb{R}^m}=\max_{1\le r\le m}|x_r|.
\]
Since $\mathbb{R}^m$ is finite-dimensional, all norms on $\mathbb{R}^m$ are
equivalent; hence this choice does not affect compactness or approximation
properties, and simplifies componentwise estimates below.

Fix an arbitrary $\varepsilon>0$ and set $\delta:=\varepsilon/4$.

Let $h=(h_1,\dots,h_m)\in W$. For each component $r=1,\dots,m$,
the density of ridge networks on $K$ yields a finite ridge expansion
\[
R_{h,r}(y)
=
\sum_{j=1}^{N(h,r)} \alpha_{h,r,j}\,
\sigma(\omega_{h,r,j}\cdot y+\zeta_{h,r,j}),
\qquad y\in K,
\]
such that $\|h_r-R_{h,r}\|_K<\delta$.

Define the vector-valued function
\[
R_h(y):=(R_{h,1}(y),\dots,R_{h,m}(y)),
\qquad y\in K.
\]
Then
\[
\|h-R_h\|_K<\delta.
\]

Define an open neighborhood of $h$ in $W$ by
\[
U_h:=\{h'\in W:\ \|h'-h\|_K<\delta\}.
\]
Then for every $h'\in U_h$,
\begin{equation}\label{eq:triangle}
\|h'-R_h\|_K
\le
\|h'-h\|_K+\|h-R_h\|_K
<
\delta+\delta
=
2\delta.
\end{equation}

Since $W$ is compact, the open cover $\{U_h:h\in W\}$ admits a finite
subcover. Hence there exist $h^{(1)},\dots,h^{(M)}\in W$ such that
\begin{equation}\label{eq:finite-cover}
W\subset \bigcup_{j=1}^M U_{h^{(j)}}.
\end{equation}

Now collect all ridge functions appearing in the finitely many approximants
$R_{h^{(j)}}$. Denote the elements of this finite set by
\[
\phi_k(y)=\sigma(\omega_k\cdot y+\zeta_k),
\qquad k=1,\dots,N.
\]
By adding zero coefficients if necessary, for each $j=1,\dots,M$ we may
write
\begin{equation}\label{eq:Rj}
R_{h^{(j)}}(y)
=
\sum_{k=1}^N A_{j,k}\,\phi_k(y),
\qquad y\in K,
\end{equation}
with vectors $A_{j,k}\in\mathbb{R}^m$.

Since $W$ is a compact metric space, it is paracompact.
By the partition of unity theorem (see Dugundji \cite[p.~170, Theorem~4.2]{Dugun}),
the finite open cover \eqref{eq:finite-cover} admits a continuous partition
of unity subordinate to it. Hence there exist continuous functions
\[
\eta_j:W\to[0,1],\qquad j=1,\dots,M,
\]
such that
\begin{itemize}
\item [1)] $\sum_{j=1}^M \eta_j(h)=1$ for all $h\in W$;
\item [2)] $\operatorname{supp}(\eta_j)\subset U_{h^{(j)}}$.
\end{itemize}

Define continuous coefficient maps $c_k:W\to\mathbb{R}^m$ by
\[
c_k(h):=\sum_{j=1}^M \eta_j(h)\,A_{j,k},
\qquad k=1,\dots,N.
\]
Define
\[
A:W\to C(K;\mathbb{R}^m),
\qquad
(A(h))(y):=\sum_{k=1}^N c_k(h)\,\phi_k(y).
\]

We claim that
\begin{equation}\label{eq:hA}
\|h-A(h)\|_K\le 2\delta
\quad\text{for all }h\in W.
\end{equation}

Indeed, using \eqref{eq:Rj} and the definition of $c_k$,
\[
A(h)(y)=\sum_{j=1}^M \eta_j(h)\,R_{h^{(j)}}(y),
\]
hence
\[
h(y)-A(h)(y)
=
\sum_{j=1}^M \eta_j(h)\bigl(h(y)-R_{h^{(j)}}(y)\bigr).
\]
If $\eta_j(h)\neq0$, then $h\in U_{h^{(j)}}$, and therefore by \eqref{eq:triangle}
we have $\|h-R_{h^{(j)}}\|_K<2\delta$. Thus
\[
\|h(y)-A(h)(y)\|_{\mathbb{R}^m}
\le
\sum_{j=1}^M \eta_j(h)\cdot 2\delta
=
2\delta.
\]
Taking the maximum over $y\in K$ yields \eqref{eq:hA}.

For each $k$ and $r=1,\dots,m$, define
\[
b_{k,r}(u):=\bigl(c_k(G(u))\bigr)_r,
\qquad u\in V.
\]
Each $b_{k,r}$ is continuous on $V$.

Set
\[
M_k:=\max_{y\in K}|\phi_k(y)|<\infty.
\]
By Theorem~\ref{thm:univX}, for each $(k,r)$ there exists a scalar topological network
$a_{k,r}:X\to\mathbb{R}$ such that
\begin{equation}\label{eq:akr}
\sup_{u\in V}|b_{k,r}(u)-a_{k,r}(u)|
<
\frac{\delta}{N(M_k+1)m}.
\end{equation}

Define
\[
a_k(u)=(a_{k,1}(u),\dots,a_{k,m}(u)), \qquad k=1,\dots,N,
\]
and set
\[
\widetilde G(u)(y)
:=
\sum_{k=1}^N a_k(u)\,\phi_k(y).
\]

Fix $u\in V$ and $y\in K$. By \eqref{eq:hA},
\[
\|G(u)(y)-A(G(u))(y)\|_{\mathbb{R}^m}\le 2\delta.
\]
Moreover,
\[
A(G(u))(y)-\widetilde G(u)(y)
=
\sum_{k=1}^N (c_k(G(u))-a_k(u))\,\phi_k(y).
\]
Using \eqref{eq:akr} and $|\phi_k(y)|\le M_k$, we obtain
\[
\|A(G(u))(y)-\widetilde G(u)(y)\|_{\mathbb{R}^m}
\le
\sum_{k=1}^N \frac{\delta}{N(M_k+1)m}\cdot M_k
\le
\delta.
\]

Hence
\[
\|G(u)(y)-\widetilde G(u)(y)\|_{\mathbb{R}^m}
\le
2\delta+\delta
<
\varepsilon.
\]
Taking supremum over $u\in V$ and $y\in K$
completes the proof.
\end{proof}

We now state an approximation theorem in the style of the dot-product DeepONet
formulation (compare \cite[Theorem~2]{Lu}).

\begin{theorem}\label{thm:TopDeepONetTheorem2}
Under the assumptions of Theorem~\ref{thm:operator-separable}, for every
$\varepsilon>0$ there exists an integer $p\ge 1$, ridge functions
$t_k(y)=\sigma(\omega_k\cdot y+\zeta_k)$ on $K\subset\mathbb{R}^d$, and a trunk map
\[
\mathcal{T}(y)=\bigl(t_1(y),\dots,t_p(y)\bigr)^T\in\mathbb{R}^{p},
\qquad y\in K,
\]
together with a branch map $\mathcal{B}:X\to\mathbb{R}^{m\times p}$, whose
columns are topological neural networks on $X$
constructed as in Definition~\ref{def:1layer},
such that
\[
\sup_{u\in V}\ \sup_{y\in K}
\|G(u)(y)-\mathcal{B}(u)\mathcal{T}(y)\|_{\mathbb{R}^m}<\varepsilon.
\]
In other words, the operator $G$ admits an $\varepsilon$--approximation on
$V\times K$ by a DeepONet in the sense of
Definition~\ref{def:TDeepONet}.
\end{theorem}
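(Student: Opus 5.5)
This is essentially a repackaging of Theorem~\ref{thm:operator-separable} in matrix form, so I would derive it directly from that result. Fix $\varepsilon>0$ and apply Theorem~\ref{thm:operator-separable} with the same $\varepsilon$. This gives an integer $N\ge1$, ridge functions $\phi_k(y)=\sigma(\omega_k\cdot y+\zeta_k)$, and vector-valued topological neural networks $a_k:X\to\mathbb{R}^m$, $k=1,\dots,N$, each built as in Definition~\ref{def:1layer}, such that
\[
\sup_{u\in V}\sup_{y\in K}\Bigl\|G(u)(y)-\sum_{k=1}^N a_k(u)\phi_k(y)\Bigr\|_{\mathbb{R}^m}<\varepsilon .
\]

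Next I would set $p:=N$ and $t_k:=\phi_k$, so the trunk map is $\mathcal{T}(y)=(t_1(y),\dots,t_p(y))^T$. This is a Euclidean single-hidden-layer network with $p$ outputs, each output being one ridge neuron (identity output weights). For the branch, let $\mathcal{B}(u)$ be the $m\times p$ matrix whose $k$-th column is $b_k(u):=a_k(u)$. Each column is a topological neural network on $X$ in the sense of Definition~\ref{def:1layer}, as Definition~\ref{def:TDeepONet} requires.

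The one computation to check is that matrix--vector multiplication reproduces the separable sum:
\[
\mathcal{B}(u)\mathcal{T}(y)=\sum_{k=1}^p b_k(u)\,t_k(y)=\sum_{k=1}^N a_k(u)\phi_k(y),
\]
which is exactly \eqref{eq:separableSum}. Substituting this into the estimate above gives the claimed uniform bound on $V\times K$.

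There is no real obstacle. The only point needing care is whether the branch should be a single network $X\to\mathbb{R}^{m\times p}$ rather than a collection of columns. If so, I would note that the hidden units of all the $a_k$ can be concatenated into one hidden layer, with a block output matrix $A$ producing all $mp$ entries. Such a network still belongs to $\mathcal{S}_\sigma(X;\mathbb{R}^{mp})$, so both readings of Definition~\ref{def:TDeepONet} are satisfied.
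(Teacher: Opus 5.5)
Your proposal is correct and matches the paper's proof: apply Theorem~\ref{thm:operator-separable}, take $p=N$ and $t_k=\phi_k$, let the $k$-th column of $\mathcal{B}(u)$ be $a_k(u)$, and note that $\mathcal{B}(u)\mathcal{T}(y)=\sum_{k}a_k(u)t_k(y)$. Your extra remark, that the columns can be merged into a single network in $\mathcal{S}_\sigma(X;\mathbb{R}^{mp})$ by concatenating hidden units and using a block output matrix, is valid and goes slightly beyond what the paper states.
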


\begin{proof}
By Theorem~\ref{thm:operator-separable}, there exists a separable expansion
\[
G(u)(y)\approx \sum_{k=1}^{p} a_k(u)\,t_k(y),
\qquad t_k(y)=\sigma(\omega_k\cdot y+\zeta_k).
\]
Define $\mathcal{T}(y)=(t_1(y),\dots,t_p(y))^T$ and define $\mathcal{B}(u)$ as
the $m\times p$ matrix whose $k$th column is $a_k(u)$. Then
\[
\sum_{k=1}^{p} a_k(u)\,t_k(y)=\mathcal{B}(u)\mathcal{T}(y),
\]
and the claimed uniform error bound is exactly the estimate obtained in
Theorem~\ref{thm:operator-separable}.
\end{proof}

\begin{remark}
When $m=1$, the matrix--vector product reduces to a dot product:
$\mathcal{B}(u)\mathcal{T}(y)=\langle b(u),t(y)\rangle$, which is the dot-product
formulation of DeepONets in \cite{Lu}. Theorem~\ref{thm:TopDeepONetTheorem2}
therefore extends the DeepONet approximation theorem to operators whose input
lies in a compact subset of a locally convex space and whose output functions are $\mathbb{R}^m$-valued.
\end{remark}

\begin{remark}
In classical DeepONets, the branch network typically receives measurements of
the input function at finitely many sensor locations, that is, values of the form
$u(x_1),\dots,u(x_r)$. Such measurements are meaningful because the input belongs
to a function space, where pointwise evaluation is naturally defined.

In the present setting, the input $u$ is an element of a locally convex space $X$
and need not be a function. Consequently, pointwise sampling is not available in
general. Instead, the branch encoder accesses $u$ through finitely many continuous
linear functionals $\ell(u)$, where $\ell\in X^*$. These functionals play the role
of generalized sensors: each network uses finitely many of them to form a finite
measurement vector.

When $X$ happens to be a function space, classical sensors are recovered as a
special case. For example, if $X=C(\Omega)$ is the space of continuous functions on a compact set $\Omega$
endowed with the uniform norm topology,
then the point evaluation maps $u\mapsto u(x)$ are continuous linear functionals,
hence belong to $X^*$. Thus continuous linear functionals provide a natural and
flexible abstract measurement interface for operator learning in the locally
convex setting.
\end{remark}

\begin{remark}
The trunk side in Theorem~\ref{thm:TopDeepONetTheorem2} is presented in ridge
form because it is one of the simplest universal approximators on compacts of $\mathbb{R}^d$.
In applications, one may replace the ridge trunk by a deep neural network
(ResNet, CNN, etc.) provided that the chosen trunk class is universal on $K$.
Likewise, on the branch side one may choose shallow or deep topological networks on $X$
constructed as in Definitions~\ref{def:1layer} and~\ref{def:Llayer}. The
approximation mechanism requires only density on compact sets, as captured
abstractly by Theorem~\ref{thm:univX}.
\end{remark}

The preceding remarks clarify the interpretation of the branch--trunk
construction and its relation to classical DeepONets. We now record two
direct consequences of Theorem~\ref{thm:operator-separable} showing that,
when the input space is a continuous function space and the admissible
measurements are chosen accordingly, one recovers the operator-approximation
results that form the theoretical foundation of DeepONets.

In particular, the classical Chen--Chen operator approximation theorem and the dot-product DeepONet
approximation theorem of Lu et al.\ appear as special cases of the present
locally convex framework.

\begin{corollary}[see Theorem 5 in \cite{Chen2} or Theorem 1 in \cite{Lu}]
Suppose $E$ is a Banach
space, $K_1\subset E$ and $K_2\subset\mathbb{R}^d$ are compact sets in $E$ and
$\mathbb{R}^d$, respectively, $V\subset C(K_1)$ is compact, and
$G:V\to C(K_2)$ is a continuous nonlinear operator.
Assume that the activation $\sigma\in C(\mathbb{R})$ is a Tauber--Wiener function.

Then for every $\varepsilon>0$ there exist integers
$n,p,r\ge1$, points $x_1,\dots,x_r\in K_1$, parameters
$c_{ki},\theta_{ki},\xi_{kij},\zeta_k\in\mathbb{R}$,
and vectors $\omega_k\in\mathbb{R}^d$ such that
\[
\left|
G(u)(y)-
\sum_{k=1}^p\sum_{i=1}^n
c_{ki}\,
\sigma\!\left(\sum_{j=1}^r \xi_{kij}u(x_j)-\theta_{ki}\right)
\sigma(\omega_k\cdot y+\zeta_k)
\right|
<\varepsilon
\]
for all $u\in V$ and $y\in K_2$.
\end{corollary}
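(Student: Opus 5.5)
Apply Theorem~\ref{thm:operator-separable} with $X=C(K_1)$ (a Banach space, hence locally convex), $m=1$, $V\subset C(K_1)$, $K=K_2$, and target accuracy $\varepsilon/2$. This yields $p\ge1$, ridge functions $\phi_k(y)=\sigma(\omega_k\cdot y+\zeta_k)$ and scalar topological networks
\[
a_k(u)=\sum_{i=1}^{n} c_{ki}\,\sigma(\ell_{ki}(u)-\theta_{ki}),\qquad \ell_{ki}\in C(K_1)^*.
\]
Padding with zero coefficients lets us take a common $n$ for all $k$. They satisfy
\[
\sup_{u\in V}\sup_{y\in K_2}\Bigl|G(u)(y)-\sum_{k=1}^p a_k(u)\phi_k(y)\Bigr|<\varepsilon/2.
\]
It then remains to replace each functional $\ell_{ki}$ by a finite combination of point evaluations, exactly as in the proof of Theorem~\ref{thm:chen-chen}.

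\textbf{Discretization step.} I would not use a Riemann sum with respect to the Riesz measure (note $K_1$ is only a compact subset of a Banach space, hence a compact metric space, which is all that is needed). Instead I would use the compactness of $V$ directly. By Arzel\`a--Ascoli, $V$ is uniformly bounded, say by $B$, and equicontinuous. So, given $\eta>0$, there is $\rho>0$ with $|u(x)-u(x')|<\eta$ whenever $d(x,x')<\rho$, for all $u\in V$. Cover $K_1$ by finitely many $\rho$-balls centred at $x_1,\dots,x_r$ and take a continuous partition of unity $\{\psi_j\}$ subordinate to this cover. Define $P u=\sum_j u(x_j)\psi_j$. Then $\|u-Pu\|_{K_1}\le\eta$ for all $u\in V$, and consequently
\[
|\ell_{ki}(u)-\ell_{ki}(Pu)|\le\|\ell_{ki}\|\,\eta .
\]
Moreover $\ell_{ki}(Pu)=\sum_j \xi_{kij}u(x_j)$ with $\xi_{kij}:=\ell_{ki}(\psi_j)$. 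Thus a single common set of sensors $x_1,\dots,x_r$ serves all $(k,i)$ simultaneously, which is exactly the form required in the statement.

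\textbf{Error control.} The arguments $\ell_{ki}(u)-\theta_{ki}$ and $\ell_{ki}(Pu)-\theta_{ki}$ all lie in a fixed compact interval $I$, since $|\ell_{ki}(u)|,|\ell_{ki}(Pu)|\le\|\ell_{ki}\|B$. Since $\sigma$ is uniformly continuous on $I$, choose $\eta$ small enough that
\[
|\sigma(s)-\sigma(s')|<\frac{\varepsilon}{2\,(1+\sum_{k,i}|c_{ki}|)\,(1+\max_k\max_{K_2}|\phi_k|)}
\]
whenever $s,s'\in I$ and $|s-s'|\le\max_{k,i}\|\ell_{ki}\|\eta$. Summing over $k,i$ bounds the change from $a_k(u)$ to its discretized version, multiplied by $\phi_k(y)$, by $\varepsilon/2$ uniformly in $u\in V$ and $y\in K_2$. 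The triangle inequality then gives total error $<\varepsilon$.

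The only delicate point is this discretization step. It needs uniformity over $V$ and a single shared set of sensors for all branch neurons; the equicontinuity and partition-of-unity construction provides both. Everything else is bookkeeping of constants.
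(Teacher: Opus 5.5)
Your proof is correct, but it takes a different route from the paper's. The paper treats its two earlier results as black boxes. It first obtains the separable expansion $\sum_k a_k(u)\phi_k(y)$ from Theorem~\ref{thm:operator-separable}. It then applies Theorem~\ref{thm:chen-chen} separately to each continuous functional $a_k|_V$, which gives point-evaluation networks $\widetilde a_k$ with $\sup_V|a_k-\widetilde a_k|<\varepsilon/(2p(M+1))$. Finally it merges the $p$ separate sensor sets and hidden widths by padding with zero coefficients.

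You instead open up the networks $a_k$ and discretize their first-layer functionals directly, replacing each $\ell_{ki}$ by $\ell_{ki}\circ P$, where $Pu=\sum_j u(x_j)\psi_j$ is a finite-rank quasi-interpolant. This has several effects:
\begin{itemize}
\item $c_{ki}$, $\theta_{ki}$ and $n$ stay unchanged, and one shared sensor set comes for free.
\item You avoid the Riesz representation and the Riemann-sum step, which the paper only sketches in its proof of Theorem~\ref{thm:chen-chen}.
\item You also skip the detour in which Theorem~\ref{thm:chen-chen} re-approximates $a_k|_V$ via Theorem~\ref{thm:univX}, even though $a_k|_V$ is already such a network.
\end{itemize}
Your bound $\|Pu\|_{K_1}\le B$ is exactly what makes the interval $I$ independent of $\eta$, so choosing $\eta$ after $I$ and then $\rho$ and the $x_j$ after $\eta$ is legitimate.

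The paper's version is shorter once Theorem~\ref{thm:chen-chen} is available. Yours is self-contained, and it shows explicitly that the classical DeepONet arises from the topological one by composing each measurement with a finite-rank operator. The same argument applies whenever continuous finite-rank operators built from the admissible sensors approximate the identity uniformly on $V$.
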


\begin{proof}
Apply Theorem~\ref{thm:operator-separable} with $X=C(K_1)$, compact
$V\subset C(K_1)$, $K=K_2$, and $m=1$. Then there exist $p\ge1$, ridge functions
$\phi_k(y)=\sigma(\omega_k\cdot y+\zeta_k)$, and topological neural networks
\[
a_k:C(K_1)\to\mathbb{R},
\qquad k=1,\dots,p,
\]
such that
\begin{equation}\label{eq:cor-short-1}
\sup_{u\in V}\sup_{y\in K_2}
\left|
G(u)(y)-\sum_{k=1}^{p} a_k(u)\,\phi_k(y)
\right|<\frac{\varepsilon}{2}.
\end{equation}

Set
\[
M:=\max_{1\le k\le p}\ \sup_{y\in K_2}|\phi_k(y)|<\infty,
\qquad
\eta:=\frac{\varepsilon}{2p(M+1)}.
\]
For each $k$, apply Theorem~\ref{thm:chen-chen} to the continuous functional
$a_k|_V:V\to\mathbb{R}$
to obtain an approximation of the form
\[
\widetilde a_k(u)=\sum_{i=1}^{n}
c_{ki}\,
\sigma\!\left(\sum_{j=1}^{r}\xi_{kij}u(x_j)-\theta_{ki}\right)
\]
(with a common choice of $r,n$ after unifying finitely many sensor points and
adding zero coefficients if necessary) satisfying
\begin{equation}\label{eq:cor-short-2}
\sup_{u\in V}|a_k(u)-\widetilde a_k(u)|<\eta,
\qquad k=1,\dots,p.
\end{equation}
Define
\[
\widetilde G(u)(y):=\sum_{k=1}^{p}\widetilde a_k(u)\,\phi_k(y)
=\sum_{k=1}^{p}\sum_{i=1}^{n}
c_{ki}\,
\sigma\!\left(\sum_{j=1}^{r}\xi_{kij}u(x_j)-\theta_{ki}\right)
\sigma(\omega_k\cdot y+\zeta_k).
\]
Then for $u\in V$ and $y\in K_2$, by \eqref{eq:cor-short-1}--\eqref{eq:cor-short-2},
\begin{align*}
|G(u)(y)-\widetilde G(u)(y)|
&\le
\left|G(u)(y)-\sum_{k=1}^{p} a_k(u)\phi_k(y)\right|
+
\sum_{k=1}^{p}|a_k(u)-\widetilde a_k(u)|\,|\phi_k(y)| \\
&<
\frac{\varepsilon}{2}+p\eta M
\le
\frac{\varepsilon}{2}+p\eta(M+1)
=
\varepsilon .
\end{align*}
Taking the supremum over $u\in V$ and $y\in K_2$ yields the required uniform
estimate.
\end{proof}

\begin{corollary}[see Theorem 2 in \cite{Lu}]
Let $E$ be a Banach space, $K_1\subset E$ and $K_2\subset\mathbb{R}^d$
compact sets, and let $V\subset C(K_1)$ be compact.
Assume that $G:V\to C(K_2)$ is a continuous nonlinear operator and that the
activation $\sigma\in C(\mathbb{R})$ is a Tauber--Wiener function.

Then for every $\varepsilon>0$ there exist integers $r,p\ge1$, points
$x_1,\dots,x_r\in K_1$, and continuous mappings
\[
\mathcal{B}:\mathbb{R}^r\to\mathbb{R}^p,
\qquad
\mathcal{T}:\mathbb{R}^d\to\mathbb{R}^p,
\]
such that
\[
\left|
G(u)(y)-
\big\langle
\mathcal{B}\big(u(x_1),\dots,u(x_r)\big),\, \mathcal{T}(y)
\big\rangle
\right|
<\varepsilon
\]
for all $u\in V$ and $y\in K_2$.

Moreover, the maps $\mathcal{B}$ and $\mathcal{T}$ may be chosen from any class
of neural networks on $\mathbb{R}^r$ and $\mathbb{R}^d$, respectively, that is
dense in the corresponding spaces of continuous functions on compact sets
(e.g. fully connected, residual, or convolutional architectures).
\end{corollary}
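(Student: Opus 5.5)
The plan is to obtain this corollary from the preceding corollary (the Chen--Chen/Lu et al.\ Theorem~1 form), and then to handle the ``moreover'' clause by replacing the explicit shallow branch and trunk with networks drawn from arbitrary dense classes. Fix $\varepsilon>0$.

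First apply Theorem~\ref{thm:operator-separable} with $X=C(K_1)$, $K=K_2$ and $m=1$. This gives $p$, ridge functions $\phi_k$, and topological networks $a_k$ with uniform error at most $\varepsilon/3$ on $V\times K_2$. Next, as in the proof of the preceding corollary, apply Theorem~\ref{thm:chen-chen} to each $a_k|_V$ and merge the finitely many sensor sets into one common set $x_1,\dots,x_r\in K_1$. The result is
\[
\widetilde a_k(u)=\beta_k\bigl(u(x_1),\dots,u(x_r)\bigr),
\qquad
\beta_k(z)=\sum_{i}c_{ki}\,\sigma\Bigl(\sum_j\xi_{kij}z_j-\theta_{ki}\Bigr),
\]
which is continuous on $\mathbb{R}^r$. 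Set $\mathcal{B}=(\beta_1,\dots,\beta_p)$ and $\mathcal{T}=(\phi_1,\dots,\phi_p)$. Both maps are continuous, and the error bound from the preceding corollary gives the main inequality directly.

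For the ``moreover'' part, consider the sampling map $S(u)=(u(x_1),\dots,u(x_r))$, which is continuous from $C(K_1)$ to $\mathbb{R}^r$. The set $Z:=S(V)$ is therefore compact in $\mathbb{R}^r$. I now use a perturbation argument:
\begin{itemize}
\item Let $B:=\sup_{z\in Z}\|\mathcal{B}(z)\|_\infty$ and $T:=\sup_{y\in K_2}\|\mathcal{T}(y)\|_\infty$.
\item Choose a network $\widetilde{\mathcal{T}}$ from the given dense class with $\sup_{K_2}\|\mathcal{T}-\widetilde{\mathcal{T}}\|_\infty<\eta$.
\item Choose a network $\widetilde{\mathcal{B}}$ with $\sup_Z\|\mathcal{B}-\widetilde{\mathcal{B}}\|_\infty<\eta$.
\end{itemize}
The bilinear estimate is
\[
|\langle\mathcal{B},\mathcal{T}\rangle-\langle\widetilde{\mathcal{B}},\widetilde{\mathcal{T}}\rangle|
\le p\bigl(\|\mathcal{B}-\widetilde{\mathcal{B}}\|\,\|\mathcal{T}\|+\|\widetilde{\mathcal{B}}\|\,\|\mathcal{T}-\widetilde{\mathcal{T}}\|\bigr)\le p\bigl(\eta T+(B+\eta)\eta\bigr).
\]
Taking $\eta$ small enough makes this at most $\varepsilon/3$. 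Combined with the earlier bounds, the total error is below $\varepsilon$.

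The main point needing care is that density is only available on compact sets. The branch therefore has to be approximated on the compact image $Z=S(V)$, not on all of $\mathbb{R}^r$. The bound on $\widetilde{\mathcal{B}}$ must also be uniform, which the estimate $B+\eta$ provides. The rest is bookkeeping already done in the previous corollary.
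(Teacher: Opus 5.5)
Your proposal is correct and follows the paper's route. You build $\mathcal{B}$ and $\mathcal{T}$ from the explicit branch--trunk expansion of the preceding corollary (Theorem~\ref{thm:operator-separable} followed by Theorem~\ref{thm:chen-chen} with merged sensor points), then replace them by networks from a dense class; the only difference is that you spell out the ``moreover'' step (approximation on the compact set $S(V)\subset\mathbb{R}^r$ and on $K_2$, with the bilinear perturbation bound), which the paper leaves to a one-line remark.
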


\begin{proof}
Fix $\varepsilon>0$. By the preceding corollary, there exist integers
$r,n,p\ge1$, points $x_1,\dots,x_r\in K_1$, real parameters
$c_{ki},\theta_{ki},\xi_{kij},\zeta_k$, and vectors $\omega_k\in\mathbb{R}^d$
such that
\begin{equation}\label{eq:Lu-cor-prev}
\sup_{u\in V}\sup_{y\in K_2}
\left|
G(u)(y)-
\sum_{k=1}^p\sum_{i=1}^n
c_{ki}\,
\sigma\!\left(\sum_{j=1}^r \xi_{kij}u(x_j)-\theta_{ki}\right)
\sigma(\omega_k\cdot y+\zeta_k)
\right|
<\varepsilon.
\end{equation}

Define $\mathcal{B}:\mathbb{R}^r\to\mathbb{R}^p$ by
\[
\mathcal{B}(z_1,\dots,z_r)
:=
\Bigg(
\sum_{i=1}^n
c_{1i}\,
\sigma\!\left(\sum_{j=1}^r \xi_{1ij}z_j-\theta_{1i}\right),
\ \dots,\
\sum_{i=1}^n
c_{pi}\,
\sigma\!\left(\sum_{j=1}^r \xi_{pij}z_j-\theta_{pi}\right)
\Bigg),
\]
and define $\mathcal{T}:\mathbb{R}^d\to\mathbb{R}^p$ by
\[
\mathcal{T}(y):=\bigl(\sigma(\omega_1\cdot y+\zeta_1),\dots,
\sigma(\omega_p\cdot y+\zeta_p)\bigr).
\]
Both maps are continuous because $\sigma$ is continuous and they are finite
linear combinations and compositions of continuous functions.

For $u\in V$ and $y\in K_2$, substituting
$z=(u(x_1),\dots,u(x_r))$ gives
\[
\big\langle \mathcal{B}(u(x_1),\dots,u(x_r)),\,\mathcal{T}(y)\big\rangle
=
\sum_{k=1}^p\sum_{i=1}^n
c_{ki}\,
\sigma\!\left(\sum_{j=1}^r \xi_{kij}u(x_j)-\theta_{ki}\right)
\sigma(\omega_k\cdot y+\zeta_k).
\]
Combining this identity with \eqref{eq:Lu-cor-prev} yields
\[
\sup_{u\in V}\sup_{y\in K_2}
\left|
G(u)(y)-
\big\langle
\mathcal{B}\big(u(x_1),\dots,u(x_r)\big),\,\mathcal{T}(y)
\big\rangle
\right|
<\varepsilon,
\]
as required.

The final statement follows by approximating the continuous maps
$\mathcal{B}$ and $\mathcal{T}$ uniformly on the relevant compact sets by
networks from any dense architecture class.
\end{proof}

\begin{remark}
Lanthaler et al.\ \cite{Lant} establish a universality result for
DeepONets in a probabilistic setting: the input space is equipped with a
probability measure and the approximation is formulated in an $L^2$ metric.
Within this framework they obtain approximation results for measurable
operators, thereby removing the continuity and compactness assumptions that
appear in the classical operator approximation theorem of Chen and Chen
\cite{Chen2}. Their work also provides quantitative error bounds and complexity
estimates for DeepONets, which lie outside the scope of the present
approximation-theoretic framework.

In comparison with the Chen--Chen theorem, the universality result of
Lanthaler et al.\ comes at the expense of considering a weaker distance,
namely $L^2$ instead of the uniform $L^\infty$ distance
(see \cite[Remark~3.1]{Lant}).

Thus, while \cite{Lant} relaxes continuity and compactness assumptions by
working in a probabilistic $L^2$ framework, the present work preserves uniform
approximation and extends operator approximation theory beyond Banach input
domains to general locally convex spaces. In particular, the classical
Chen--Chen theorem and the DeepONet approximation theorem of Lu et al.\ appear
as special cases of our framework.
\end{remark}

\section{Examples illustrating Theorem~\ref{thm:operator-separable}}

In this section we illustrate Theorem~\ref{thm:operator-separable} for several
important normed and locally convex spaces. In all examples the admissible
measurements are continuous linear functionals from the dual space $X^*$.
By Theorem~\ref{thm:operator-separable}, for a compact set $V\subset X$ and a continuous operator
\[
G:V\to C(K;\mathbb{R}^m),
\qquad K\subset\mathbb{R}^d\ \text{compact},
\]
we obtain approximations of the form
\begin{equation}\label{eq:example-template}
G(u)(y)\approx
\sum_{k=1}^N a_k(u)\,\sigma(\omega_k\cdot y+\zeta_k),
\qquad u\in V,\ y\in K,
\end{equation}
where each coefficient map $a_k:X\to\mathbb{R}^m$ is a topological neural network on $X$
constructed as in Definition~\ref{def:1layer}.

\medskip

\noindent\textbf{Example 1.}
Let $X=\mathbb{R}^d$ or, more generally, let
\[
X=M_{n\times p}(\mathbb{R})
\]
be the vector space of all $n\times p$ real matrices equipped with any norm.
Then $X$ is a finite-dimensional Banach space and every continuous linear
functional on $X$ can be written in the form
\[
A\longmapsto \mathrm{trace}(W^{T}A), \qquad A\in X,
\]
for some matrix $W\in M_{n\times p}(\mathbb{R})$.

Let $V\subset X$ be compact and let $G:V\to C(K;\mathbb{R}^m)$
be continuous. Applying Theorem~\ref{thm:operator-separable} yields approximations of the form
\eqref{eq:example-template}, where each coefficient map $a_k:X\to\mathbb{R}^m$
is a topological neural network on $X$ in the sense of Definition~\ref{def:1layer}.
More precisely, there exist matrices $W_{k,1},\dots,W_{k,r_k}\in M_{n\times p}(\mathbb{R})$,
vectors $c_{k,i}\in\mathbb{R}^m$, and scalars
$\theta_{k,i}\in\mathbb{R}$ such that
\[
a_k(A)
=
\sum_{i=1}^{r_k}
c_{k,i}\,
\sigma\!\bigl(\mathrm{trace}(W_{k,i}^T A)-\theta_{k,i}\bigr),
\qquad A\in X.
\]

Thus the branch part of the network uses finitely many linear measurements
of the input matrix $A$ of the form $A\mapsto \mathrm{trace}(W^T A)$,
while the trunk part consists of ridge functions on $K$.

\medskip

\noindent\textbf{Example 2.}
Let $1\le p<\infty$ and let $X=\ell_p$, the Banach space of real sequences
$x=(x_1,x_2,\dots)$ with
\[
\|x\|_p=\left(\sum_{n=1}^\infty |x_n|^p\right)^{1/p}<\infty .
\]
Then the continuous dual is $X^*=\ell_q$, $q=\frac{p}{p-1}$, and every
continuous linear functional on $\ell_p$ can be written in the form
\[
x\longmapsto \sum_{n=1}^\infty w_n x_n,\qquad
w=(w_1,w_2,\dots)\in\ell_q .
\]

Let $V\subset\ell_p$ be compact and let
$G:V\to C(K;\mathbb{R}^m)$ be continuous.
Applying Theorem~\ref{thm:operator-separable} yields approximations of the
form \eqref{eq:example-template}, where each coefficient map
$a_k:X\to\mathbb{R}^m$ is a topological neural network on $X$ in the sense of
Definition~\ref{def:1layer}. More precisely, there exist vectors
\[
w^{(k,1)},\dots,w^{(k,r_k)}\in\ell_q,
\]
vectors $c_{k,i}\in\mathbb{R}^m$, and scalars $\theta_{k,i}\in\mathbb{R}$ such that
\[
a_k(x)
=
\sum_{i=1}^{r_k}
c_{k,i}\,
\sigma\!\left(
\sum_{n=1}^\infty w^{(k,i)}_n x_n-\theta_{k,i}
\right),
\qquad x\in X.
\]

\medskip

\noindent\textbf{Example 3.}
Let $X=c_0$, the Banach space of real sequences converging to zero, equipped
with the sup norm. Then the continuous dual is $X^*=\ell_1$, and every
continuous linear functional on $c_0$ can be written in the form
\[
x\longmapsto \sum_{n=1}^\infty w_n x_n,\qquad
w=(w_1,w_2,\dots)\in\ell_1 .
\]

Let $V\subset c_0$ be compact and let $G:V\to C(K;\mathbb{R}^m)$ be continuous.
Applying Theorem~\ref{thm:operator-separable} yields approximations of the
form \eqref{eq:example-template}, where each coefficient map
$a_k:X\to\mathbb{R}^m$ is a topological neural network on $X$ in the sense of
Definition~\ref{def:1layer}. More precisely, there exist vectors
\[
w^{(k,1)},\dots,w^{(k,r_k)}\in\ell_1,
\]
vectors $c_{k,i}\in\mathbb{R}^m$, and scalars $\theta_{k,i}\in\mathbb{R}$ such that
\[
a_k(x)
=
\sum_{i=1}^{r_k}
c_{k,i}\,
\sigma\!\left(
\sum_{n=1}^\infty w^{(k,i)}_n x_n-\theta_{k,i}
\right),
\qquad x\in X.
\]

\medskip

\noindent\textbf{Example 4.}
Let $(\Omega,\mu)$ be a measure space and let $X=L_p(\Omega,\mu)$,
$1\le p<\infty$. If $p=1$, assume in addition that $(\Omega,\mu)$ is
$\sigma$-finite. Then $X$ is Banach and its continuous dual is
$X^*=L_q(\Omega,\mu)$, where $q=\frac{p}{p-1}$ (with $q=\infty$ if $p=1$),
and every continuous linear functional on $X$ has the form
\[
f\longmapsto \int_\Omega f(x)g(x)\,d\mu(x),\qquad g\in L_q(\Omega,\mu).
\]

Let $V\subset L_p(\Omega,\mu)$ be compact and let
$G:V\to C(K;\mathbb{R}^m)$ be continuous.
Applying Theorem~\ref{thm:operator-separable} yields approximations of the
form \eqref{eq:example-template}, where each coefficient map
$a_k:X\to\mathbb{R}^m$ depends on finitely many integral measurements.
More precisely, there exist functions
$g_{k,1},\dots,g_{k,r_k}\in L_q(\Omega,\mu)$, vectors
$c_{k,i}\in\mathbb{R}^m$, and scalars $\theta_{k,i}\in\mathbb{R}$ such that
\[
a_k(f)
=
\sum_{i=1}^{r_k}
c_{k,i}\,
\sigma\!\left(\int_\Omega f(x)g_{k,i}(x)\,d\mu(x)-\theta_{k,i}\right),
\qquad f\in X.
\]

\medskip

\noindent\textbf{Example 5.}
Let $X=\mathcal{S}(\mathbb{R}^n)$, the Schwartz space of rapidly decreasing
smooth functions. This is a Fréchet space whose continuous dual
$X^*=\mathcal{S}'(\mathbb{R}^n)$ consists of tempered distributions.
Every continuous linear functional has the form
\[
f\longmapsto \langle T,f\rangle,\qquad T\in\mathcal{S}'(\mathbb{R}^n).
\]

Let $V\subset\mathcal{S}(\mathbb{R}^n)$ be compact and let
$G:V\to C(K;\mathbb{R}^m)$ be continuous.
Applying Theorem~\ref{thm:operator-separable} yields approximations of the
form \eqref{eq:example-template}, where each coefficient map
$a_k:X\to\mathbb{R}^m$ depends on finitely many distributional measurements.
More precisely, there exist tempered distributions
$T_{k,1},\dots,T_{k,r_k}\in\mathcal{S}'(\mathbb{R}^n)$, vectors
$c_{k,i}\in\mathbb{R}^m$, and scalars $\theta_{k,i}\in\mathbb{R}$ such that
\[
a_k(f)
=
\sum_{i=1}^{r_k}
c_{k,i}\,
\sigma\!\left(\langle T_{k,i},f\rangle-\theta_{k,i}\right),
\qquad f\in X.
\]

\medskip

\noindent\textbf{Example 6.}
Let $X=\mathcal{D}(U)$, the space of smooth compactly supported functions on
an open set $U\subset\mathbb{R}^n$. Its continuous dual
$X^*=\mathcal{D}'(U)$ is the space of distributions, and every continuous
linear functional has the form
\[
f\longmapsto \langle T,f\rangle,\qquad T\in\mathcal{D}'(U).
\]

Let $V\subset\mathcal{D}(U)$ be compact and let
$G:V\to C(K;\mathbb{R}^m)$ be continuous.
Applying Theorem~\ref{thm:operator-separable} yields approximations of the
form \eqref{eq:example-template}, where each coefficient map
$a_k:X\to\mathbb{R}^m$ depends on finitely many distributional measurements.
More precisely, there exist distributions
$T_{k,1},\dots,T_{k,r_k}\in\mathcal{D}'(U)$, vectors
$c_{k,i}\in\mathbb{R}^m$, and scalars $\theta_{k,i}\in\mathbb{R}$ such that
\[
a_k(f)
=
\sum_{i=1}^{r_k}
c_{k,i}\,
\sigma\!\left(\langle T_{k,i},f\rangle-\theta_{k,i}\right),
\qquad f\in X.
\]

\medskip

These examples illustrate that Theorem~\ref{thm:operator-separable} provides a unified
approximation principle for continuous operators on compact subsets of
finite-dimensional spaces, classical Banach spaces of sequences and functions,
and important non-normable locally convex spaces arising in analysis.

\section{Conclusion}

We present a topological extension of DeepONets in which the operator input lies
in an arbitrary locally convex topological vector space and
the network architecture is constructed using continuous linear
functionals from the dual space. Under the assumption that the activation
function is a Tauber--Wiener function, we prove a universal approximation theorem
showing that continuous nonlinear operators can be approximated on compact subsets of the
input space by finite separable expansions.

The classical Chen--Chen operator approximation theorem and the dot-product
DeepONet approximation theorem of Lu et al.\ arise as special cases of our
results. Several examples illustrate that the main theorem applies to a wide
range of spaces, including infinite-dimensional Banach spaces and non-normable
locally convex spaces.

\end{document}